\documentclass[journal,10pt,twocolumn]{IEEEtran}

\usepackage{amsmath}
\usepackage{amssymb}
\usepackage{amsthm}
\usepackage{graphicx}
\usepackage{hyperref}
\usepackage{bm}
\usepackage{bbm}
\usepackage{booktabs}
\usepackage{tikz}
\usetikzlibrary{arrows.meta,positioning}

\newtheorem{proposition}{Proposition}
\newtheorem{definition}{Definition}

\newtheorem{conjecture}{Conjecture}

\begin{document}

\title{Kernel Dynamics under Path Entropy Maximization}

\author{%
  \IEEEauthorblockN{Jnaneshwar Das}
  \IEEEauthorblockA{%
    School of Earth and Space Exploration\\
    Arizona State University, Tempe, AZ 85287, USA\\
    Earth Innovation Hub\\
    \texttt{jnaneshwar.das@asu.edu}%
  }%
}

\maketitle

\begin{abstract}
We propose a variational framework in which the kernel function
$k : \mathcal{X} \times \mathcal{X} \to \mathbb{R}$, interpreted as
the foundational object encoding what distinctions an agent can represent,
is treated as a dynamical variable subject to path entropy maximization
(Maximum Caliber, MaxCal). Each kernel defines a representational structure
over which an information geometry on probability space may be analyzed;
a trajectory through kernel space therefore corresponds to a trajectory
through a family of effective geometries, making the optimization landscape
endogenous to its own traversal. We formulate fixed-point conditions for
self-consistent kernels, propose renormalization group (RG) flow as a
structured special case, and suggest neural tangent kernel (NTK) evolution
during deep network training as a candidate empirical instantiation.
Under explicit information-thermodynamic assumptions, the work required
for kernel change is bounded below by $\delta W \geq k_B T \, \delta I_k$,
where $\delta I_k$ is the mutual information newly unlocked by the updated
kernel. In this view, stable fixed points of MaxCal over kernels correspond
to self-reinforcing distinction structures, with biological niches,
scientific paradigms, and craft mastery offered as conjectural
interpretations. We situate the framework relative to assembly theory and
the MaxCal literature, separate formal results from structured
correspondences and conjectural bridges, and pose six open questions that
make the program empirically and mathematically testable.
\end{abstract}

\begin{IEEEkeywords}
Maximum Caliber, reproducing kernel Hilbert space, renormalization
group, Fisher-Rao geometry, information thermodynamics, kernel dynamics
\end{IEEEkeywords}

% -----------------------------------------------------------------------
\section{Introduction}
% -----------------------------------------------------------------------

Every inference engine---biological, computational, or learned---operates
on a substrate that determines which differences in the world it can
represent. We call this substrate the \emph{kernel}: a positive-definite
function $k : \mathcal{X} \times \mathcal{X} \to \mathbb{R}$ whose
choice induces an inner product on a reproducing kernel Hilbert space
(RKHS) $\mathcal{H}_k$, and---via the representational structure it
defines---an effective geometry on the space of probability distributions
$\mathcal{P}$ and a bound on extractable work via the Sagawa-Ueda
generalized second law~\cite{sagawa2010}.

Throughout the paper, ``kernel'' is used in a broadened but still
technical sense: not merely as a computational device for nonlinear
learning, but as the mathematical object that determines which differences
in the world are representable, comparable, or actionable for an agent.
This representational reading motivates treating kernel change as a
change in the agent's effective distinction-making capacity, which is
the quantity the MaxCal construction is designed to model.

In this paper we adopt the kernel as the primitive object from which
geometry, dynamics, and information-thermodynamic bounds are derived.
At fixed kernel, this static viewpoint recovers a common exchange rate
$k_B \ln 2$ per bit under standard assumptions of reversible bookkeeping.
The present manuscript focuses on what that static viewpoint leaves open:
a given agent operates \emph{within} a kernel but the question of how
kernels themselves change over time was deferred.

The present paper addresses that deferral directly. We ask: \emph{if
kernels are dynamical variables, what variational principle governs their
trajectories?}

The answer we develop is Maximum Caliber (MaxCal)~\cite{press2013}---path
entropy maximization over trajectories through kernel space. The resulting
framework has three properties that distinguish it from prior treatments
of kernel learning:

\begin{enumerate}
    \item The optimization landscape is \emph{endogenous}: each kernel
    $k$ deforms the geometry of $\mathcal{P}$, so the landscape
    through which kernels evolve is itself a function of the trajectory.

    \item Fixed points of the dynamics are \emph{self-consistent} kernels:
    distinction structures that are self-reinforcing under their own
    effective geometry. We interpret RG fixed points as the most
    structurally direct special case, and biological niches, scientific
    paradigms, and craft mastery as candidate higher-level analogues of
    self-consistent kernels.

    \item Kernel change has a \emph{thermodynamic cost} bounded below by
    $k_B T$ per bit of newly unlocked mutual information---a
    Landauer principle for conceptual change.
\end{enumerate}

Relative to prior work, the contribution here is not a new
kernel-learning algorithm, nor a reformulation of classical information
geometry, nor a standard MaxCal model on a fixed state space. Rather,
the paper treats the kernel itself as the evolving object of inference
and asks what variational principle governs trajectories through the
space of distinction-making structures. The result is a framework that
is partly formal and partly programmatic: it introduces the kernel-space
MaxCal construction, formulates self-consistency and stability conditions,
and uses these to organize a set of candidate correspondences across
physics, learning, biology, and embodied craft.

The paper is organized as follows. Section~\ref{sec:background} reviews
the necessary background in MaxCal, RKHS geometry, and the
information-thermodynamic assumptions used here.
Section~\ref{sec:kernel_space} defines the space of kernels and its
topology. Section~\ref{sec:maxcal_kernels} lifts MaxCal from state space
to kernel space. Section~\ref{sec:fixed_points} derives fixed-point
conditions and their stability. Section~\ref{sec:special_cases}
instantiates the framework across RG flow, NTK evolution, adaptive
sample-return planning, biological evolution, and craft mastery.
Section~\ref{sec:assembly} situates the framework relative to assembly
theory. Section~\ref{sec:claim_levels} separates formal results from
conjectural bridges. Section~\ref{sec:open} poses six open questions.

\noindent\textbf{Claim status in this paper.}
\emph{Formal content} includes the definition of kernel space
$\mathcal{K}$, the lifting of MaxCal to path measures on kernel
trajectories, the self-consistency condition, and the frozen-kernel
stability criterion.
\emph{Structured correspondences} include the mappings to RG flow and
finite-width NTK evolution.
\emph{Conjectural bridges} include the interpretations in biological
evolution, craft mastery, adaptive field sampling, and scientific
paradigm shifts.

% -----------------------------------------------------------------------
\section{Background}
\label{sec:background}
% -----------------------------------------------------------------------

\subsection{Maximum Caliber}

Let $\Gamma$ denote the space of trajectories $\gamma : [0,T]
\to \mathcal{X}$ over a state space $\mathcal{X}$. MaxCal selects
the path distribution $p[\gamma]$ that maximizes the path entropy
\begin{equation}
    \mathcal{S}[p] = -\sum_\gamma p[\gamma] \ln \frac{p[\gamma]}{q[\gamma]}
    \label{eq:maxcal}
\end{equation}
subject to dynamical constraints $\langle f_i[\gamma] \rangle = F_i$,
where $q[\gamma]$ is a reference path measure~\cite{press2013}.
The resulting distribution takes the form
\begin{equation}
    p[\gamma] \propto q[\gamma] \exp\!\left(-\sum_i \lambda_i f_i[\gamma]\right)
\end{equation}
where $\lambda_i$ are Lagrange multipliers. MaxEnt (Jaynes) is
recovered when trajectories reduce to single configurations. MaxCal
derives Green-Kubo relations, Onsager reciprocity, Prigogine's minimum
entropy production, and master equations as special cases~\cite{press2013}.

\subsection{RKHS Geometry and the Kernel Primitive}

A Mercer kernel $k$ induces an RKHS $\mathcal{H}_k$ via the
feature map $\phi : \mathcal{X} \to \mathcal{H}_k$,
$\phi(x) = k(\cdot, x)$. The kernel is treated as the primitive
representational object, while Fisher-Rao supplies the canonical
information geometry on the associated statistical manifold; kernel
change therefore alters the effective geometry of inference by altering
the representational substrate on which that manifold is defined.
Concretely, the square-root embedding
$p \mapsto \sqrt{p} \in L^2(\mathcal{X}, \nu)$ pulls back the
$L^2$ inner product to the Fisher-Rao metric~\cite{amari2016}
\begin{equation}
    g^F_{ij}(p) = \int \frac{\partial \ln p}{\partial \theta^i}
    \frac{\partial \ln p}{\partial \theta^j} \, p \, d\nu
\end{equation}
The Hellinger kernel $k_H(p,q) = \int \sqrt{p \, q} \, d\nu$ is the
unique kernel whose induced geometry respects sufficient
statistics~\cite{chentsov1982}. The kernelized Stein discrepancy identifies
the score function $\nabla_x \log p(x)$ as the Riesz representative
of the Stein operator in $\mathcal{H}_k$~\cite{liu2016}.

The central modeling assumption of this paper is that $k$ is prior to all
of this structure: the metric, the score function, the mutual information
$I$, and the Sagawa-Ueda work bound
\begin{equation}
    W_{\text{extracted}} \leq \Delta F + k_B T \cdot I
    \label{eq:sagawa_ueda}
\end{equation}
are treated as derived objects once a kernel is fixed.

\subsection{Standing Assumptions}

The following assumptions are used throughout the paper.
\begin{enumerate}
    \item[\textbf{(A1)}] $\mathcal{X}$ is a Polish (complete, separable,
    metrizable) space.

    \item[\textbf{(A2)}] $\nu$ is a $\sigma$-finite reference measure on
    $\mathcal{X}$.

    \item[\textbf{(A3)}] Every kernel $k \in \mathcal{K}$ is Mercer
    (continuous, symmetric, positive semi-definite) and Hilbert-Schmidt:
    $\iint k(x,x')^2 \, d\nu(x) \, d\nu(x') < \infty$.

    \item[\textbf{(A4)}] The agent--environment system admits a joint
    distribution $P_{\mathrm{agent,env}}$ from which the mutual information
    $I_k = I(A;E \mid k)$ is computed by restricting the agent's
    representation to the RKHS $\mathcal{H}_k$.

    \item[\textbf{(A5)}] Thermodynamic bookkeeping is quasi-static: the
    Landauer bound $\delta W \geq k_B T \, \delta I$ (in nats) is
    attainable in the reversible limit.

    \item[\textbf{(A6)}] All information quantities ($I$, $D_{\mathrm{KL}}$)
    are measured in nats unless stated otherwise, so that Landauer's
    bound reads $k_B T$ per nat (equivalently $k_B T \ln 2$ per bit).
\end{enumerate}

% -----------------------------------------------------------------------
\section{The Space of Kernels}
\label{sec:kernel_space}
% -----------------------------------------------------------------------

\begin{definition}[Kernel space]
Let $\mathcal{K}$ denote the set of all Mercer kernels on
$\mathcal{X}$:
\[
    \mathcal{K} = \{ k : \mathcal{X} \times \mathcal{X} \to \mathbb{R}
    \mid k \text{ symmetric, positive semi-definite} \}
\]
\end{definition}

$\mathcal{K}$ is a convex cone: if $k_1, k_2 \in \mathcal{K}$ and
$\alpha, \beta \geq 0$, then $\alpha k_1 + \beta k_2 \in
\mathcal{K}$. Products $k_1 \cdot k_2 \in \mathcal{K}$ as well,
making $\mathcal{K}$ closed under the operations that arise naturally
in kernel composition.

A natural metric on $\mathcal{K}$ is induced by the
Hilbert-Schmidt norm on the associated integral operators
$T_k : L^2(\nu) \to L^2(\nu)$:
\begin{equation}
\begin{split}
    d(k_1, k_2) &= \| T_{k_1} - T_{k_2} \|_{\text{HS}} \\
    &= \biggl( \int\!\int [k_1(x,x') - k_2(x,x')]^2 \\
    &\qquad\qquad\quad d\nu(x) \, d\nu(x') \biggr)^{1/2}
\end{split}
\end{equation}
With this metric, $\mathcal{K}$ is a separable metric space and
paths $\gamma : [0,T] \to \mathcal{K}$ are well-defined as
Bochner-integrable trajectories.

\subsection{The Endogenous Landscape}

The critical structural feature of $\mathcal{K}$ is that each point
$k \in \mathcal{K}$ determines a representational substrate relative to
which an effective information geometry on $\mathcal{P}$ may be analyzed.
A path $\gamma(t) \in \mathcal{K}$ therefore
generates a one-parameter family of Riemannian manifolds
$(\mathcal{P}, g_{\gamma(t)})$. The landscape over which the kernel
evolves is itself a function of the kernel's current value.
Writing $g_k$ for the metric induced by kernel $k$, the chain rule gives
\begin{equation}
    \frac{d}{dt} g_{\gamma(t)} = Dg\big|_{\gamma(t)}[\dot\gamma(t)]
\end{equation}
where $Dg|_k : T_k\mathcal{K} \to \mathrm{Sym}^2(T^*\mathcal{P})$ is
the Fr\'echet derivative of the map $k \mapsto g_k$ at $k$, evaluated in
direction $\dot\gamma(t)$.
This endogeneity---the landscape depends on the current position---distinguishes
kernel dynamics from ordinary optimization on a fixed manifold.

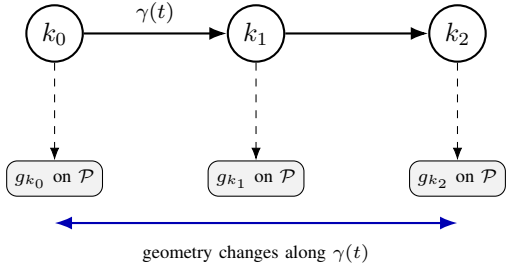
\begin{figure}[t]
\centering
\begin{tikzpicture}[
    >=Latex,
    node distance=1.9cm,
    kernel/.style={circle,draw,thick,minimum size=7.5mm,inner sep=0pt},
    labelbox/.style={draw,rounded corners,fill=gray!10,inner sep=3pt,font=\scriptsize}
]
    \node[kernel] (k0) {$k_0$};
    \node[kernel,right=of k0] (k1) {$k_1$};
    \node[kernel,right=of k1] (k2) {$k_2$};

    \draw[->,thick] (k0) -- node[above,font=\footnotesize] {$\gamma(t)$} (k1);
    \draw[->,thick] (k1) -- (k2);

    \node[labelbox,below=1.3cm of k0] (g0) {$g_{k_0}$ on $\mathcal{P}$};
    \node[labelbox,below=1.3cm of k1] (g1) {$g_{k_1}$ on $\mathcal{P}$};
    \node[labelbox,below=1.3cm of k2] (g2) {$g_{k_2}$ on $\mathcal{P}$};

    \draw[dashed,->] (k0) -- (g0);
    \draw[dashed,->] (k1) -- (g1);
    \draw[dashed,->] (k2) -- (g2);

    \draw[<->,thick,blue!70!black]
    ([yshift=-3.5mm]g0.south) -- ([yshift=-3.5mm]g2.south)
    node[midway,below=1.5mm,font=\scriptsize,black]
    {geometry changes along $\gamma(t)$};
\end{tikzpicture}
\caption{Illustration of a kernel trajectory $\gamma(t)$ in $\mathcal{K}$.
Each kernel $k_t$ determines a representational substrate relative to
which a metric $g_{k_t}$ on probability space $\mathcal{P}$ is analyzed,
so moving through $\mathcal{K}$ simultaneously deforms the geometry on
which inference proceeds.}
\label{fig:kernel_trajectory}
\end{figure}

% -----------------------------------------------------------------------
\section{MaxCal over Kernel Space}
\label{sec:maxcal_kernels}
% -----------------------------------------------------------------------

We now lift MaxCal from $\mathcal{X}$ to $\mathcal{K}$. Let
$\Pi$ denote the space of paths $\gamma : [0,T] \to \mathcal{K}$,
and let $\mathcal{Q}$ be a reference measure on $\Pi$.
By~(A3), $\mathcal{K}$ embeds isometrically into the separable Hilbert
space of Hilbert-Schmidt operators on $L^2(\nu)$; a natural choice
for $\mathcal{Q}$ is the Wiener measure on this Hilbert space,
conditioned on the positive-semidefinite cone.

\begin{definition}[Kernel path entropy]
The path entropy over kernel trajectories is
\[
    \mathcal{S}[P] = -\int_\Pi P[\gamma]
    \ln \frac{P[\gamma]}{\mathcal{Q}[\gamma]} \, \mathcal{D}\gamma
\]
where $P$ is a probability measure on $\Pi$.
\end{definition}

\subsection{A Minimal Two-Kernel Toy Model}
\label{sec:toy_model}

To make the framework concrete, consider a finite kernel family
$\mathcal{K}_2 = \{k_A, k_B\}$ and discrete time
$t = 0,1,\dots,T$. Let a trajectory be
$\gamma = (k_{t=0},\dots,k_{t=T})$ with $k_t \in \mathcal{K}_2$.
Choose a Markov reference process
\begin{equation}
    \mathcal{Q}[\gamma] = \pi_0(k_0)\prod_{t=0}^{T-1} q(k_{t+1}\mid k_t).
\end{equation}
Impose two path constraints: expected cumulative switching cost
\begin{equation}
    C[\gamma] = \sum_{t=0}^{T-1}\mathbbm{1}[k_{t+1}\neq k_t],
\end{equation}
and expected cumulative information gain
\begin{equation}
    G[\gamma] = \sum_{t=0}^{T} I_{k_t}.
\end{equation}
The MaxCal distribution is then
\begin{equation}
    P[\gamma] \propto \mathcal{Q}[\gamma]
    \exp\!\left(-\lambda_C C[\gamma] + \lambda_G G[\gamma]\right).
\end{equation}
This yields an effective two-state dynamics with transition odds
\begin{equation}
    \frac{P(k_{t+1}=k_B\mid k_t=k_A)}{P(k_{t+1}=k_A\mid k_t=k_A)}
    \propto
    \frac{q(B\mid A)}{q(A\mid A)}\exp\!\left(-\lambda_C+\lambda_G\Delta I\right),
\end{equation}
where $\Delta I = I_{k_B}-I_{k_A}$.
The toy model exhibits an explicit threshold:
switching is favored when $\lambda_G\Delta I > \lambda_C$.
This is the simplest instantiation of the tradeoff between
thermodynamic/transition cost and informational gain.

\subsection{Constraints}

Physically meaningful constraints on kernel trajectories include:

\begin{enumerate}
    \item \textbf{Thermodynamic cost constraint.}
    The expected work required to shift the kernel along a path must not
    exceed available free energy:
    \[
        \mathbb{E}_P\!\left[\int_0^T \dot{W}_k(t) \, dt\right] \leq \mathcal{F}
    \]
    where $\dot{W}_k$ is the instantaneous thermodynamic cost of
    kernel change (derived below in Section~\ref{sec:cost}).

    \item \textbf{Fidelity constraint.}
    The kernel must maintain sufficient mutual information about the
    environment's relevant microstates:
    \[
        \mathbb{E}_P[I_{k(t)}] \geq I_{\min}(t)
    \]

    \item \textbf{Consistency constraint.}
    The kernel must remain consistent with the agent's current
    generative model. Writing $p_{\mathrm{env}}$ for the environmental
    data distribution and $q_{k}$ for the agent's model distribution
    under kernel $k$ (see A4):
    \[
        \mathbb{E}_P\!\left[D_{\mathrm{KL}}(p_{\mathrm{env}} \| q_{k(t)})\right] \leq \epsilon
    \]
\end{enumerate}

\subsection{The MaxCal Kernel Distribution}

Maximizing $\mathcal{S}[P]$ subject to these constraints yields
\begin{multline}
    P[\gamma] \propto \mathcal{Q}[\gamma] \,
    \exp\!\Bigl(-\lambda_1 \int_0^T \dot{W}_k \, dt \\
    + \lambda_2 \int_0^T I_{k(t)} \, dt
    - \lambda_3 \int_0^T D_{\mathrm{KL}}(t) \, dt \Bigr)
    \label{eq:kernel_maxcal}
\end{multline}
This is the \emph{least-assuming} distribution over kernel trajectories
consistent with thermodynamic, informational, and consistency constraints.
The mode of $P$ is the most probable kernel trajectory under the
chosen reference measure and constraints, and may be interpreted as the
least-assuming path of distinction-change available to the agent model
under those assumptions.

\subsection{Thermodynamic Cost of Kernel Change}
\label{sec:cost}

The cost of moving from kernel $k$ to $k + \delta k$ is the
work required to update the agent's distinction-making capacity.
Creating mutual information between agent and environment requires
physical work by Landauer's principle~\cite{landauer1961}; the
Sagawa-Ueda bound~(\ref{eq:sagawa_ueda}) quantifies the
complementary direction (work extraction from existing correlations).
Together they imply that acquiring mutual information
$\delta I_k$ that was inaccessible to $k$ but accessible to
$k + \delta k$ costs at minimum (in nats, per assumption~A6):
\begin{equation}
    \delta W_k \geq k_B T \cdot \delta I_k
    \label{eq:kernel_landauer}
\end{equation}
where
\begin{equation}
    \delta I_k = I_{k+\delta k}(A; E)
    - I_k(A; E)
\end{equation}
with $A$, $E$ the agent and environment variables of~(A4).
Equation~(\ref{eq:kernel_landauer}) may be read as a
\emph{Landauer-type lower bound for physically realized kernel change}
under assumptions~(A4)--(A6): gains in discriminative capacity that are
physically realized by the agent carry a lower-bounded thermodynamic
cost.
Combining~(\ref{eq:kernel_landauer}) with the chain rule
$\dot{I}_k = \langle \nabla_k I_k, \dot{k}\rangle_{\mathrm{HS}}$
and Cauchy-Schwarz gives a speed limit on kernel evolution:
\begin{equation}
    \bigl\| \dot{k}(t) \bigr\|_{\mathrm{HS}} \leq
    \frac{\dot{\mathcal{F}}(t)}{k_B T \cdot
    \| \nabla_k I_k \|_{\mathrm{HS}}}
\end{equation}
where $\nabla_k I_k$ is the Hilbert-Schmidt gradient of $I$ with
respect to $k$, and $\dot{\mathcal{F}}$ is the rate of free-energy
supply. Kernels can only change as fast as free energy permits.

\noindent\textit{Scope of the thermodynamic bound.}
The bound~(\ref{eq:kernel_landauer}) should be read as an
information-thermodynamic lower bound under assumptions~(A4)--(A6),
not as a claim that arbitrary abstract conceptual change automatically
admits a direct calorimetric interpretation independent of embodiment.
The intended claim is narrower: when a change in kernel corresponds to
a physically realized increase in the agent's representational access
to environmental distinctions, Landauer-style reasoning yields a lower
bound on the work required to acquire that additional information.
The Sagawa-Ueda relation is invoked as the complementary result
governing extraction from existing correlations, not as a substitute
for the acquisition bound itself.

% -----------------------------------------------------------------------
\section{Fixed Points and Their Stability}
\label{sec:fixed_points}
% -----------------------------------------------------------------------

\begin{definition}[Self-consistent kernel]
A kernel $k^* \in \mathcal{K}$ is \emph{self-consistent} if it is a
fixed point of the MaxCal kernel dynamics:
\[
    \frac{\delta \mathcal{S}}{\delta \gamma}\bigg|_{\gamma \equiv k^*} = 0
\]
That is, $k^*$ is the maximum-caliber choice given the geometry that
$k^*$ itself induces.
\end{definition}

\subsection{Stability Criterion}

Stability of $k^*$ concerns whether small perturbations
$k^* + \epsilon\,h$ (with $h \in T_{k^*}\mathcal{K}$) grow or decay.
Let $\mathcal{S}^*(k)$ denote the optimized path entropy when the
kernel is held fixed at $k$ (i.e., $\mathcal{S}^*(k) =
\max_P \mathcal{S}[P]$ subject to the constraints of
Section~\ref{sec:maxcal_kernels}, with the kernel frozen at $k$).
A self-consistent kernel $k^*$ is \emph{stable} if the Hessian
of the self-consistency map satisfies
\begin{equation}
    D^2_k \mathcal{S}^*\big|_{k^*}[h,h] < 0 \quad
    \text{for all } h \in T_{k^*}\mathcal{K},\; h \neq 0.
\end{equation}
Unstable directions correspond to
bifurcations---transitions between distinct distinction regimes.
In the two-kernel model of Section~\ref{sec:toy_model}, stability
reduces to whether perturbations in the effective gain-cost balance
$\lambda_G \Delta I - \lambda_C$ return the system to the same
occupancy regime or drive a transition to the alternative kernel; this
illustrates in finite dimensions the broader interpretation of unstable
directions as transition channels between distinction regimes.

\begin{conjecture}
The set of stable self-consistent kernels forms a discrete (generically
zero-dimensional) subset of $\mathcal{K}$, separated by unstable
fixed points that act as transition states between basins of attraction.
\end{conjecture}

The basins of attraction of stable self-consistent kernels are the precise
mathematical content of what we informally call \emph{niches},
\emph{paradigms}, and \emph{mastery domains}.

% -----------------------------------------------------------------------
\section{Special Cases}
\label{sec:special_cases}
% -----------------------------------------------------------------------

\subsection{Renormalization Group Flow as Kernel Dynamics}

In Wilson's renormalization group~\cite{wilson1974}, integrating out
degrees of freedom below a momentum cutoff $\Lambda$ defines a map
$R_\Lambda : \mathcal{K} \to \mathcal{K}$. The RG flow is the
trajectory
\begin{equation}
    \frac{dk}{d\ln\Lambda} = \beta(k)
\end{equation}
where $\beta$ is the beta function. Fixed points of RG flow satisfy
$\beta(k^*) = 0$---scale-invariant kernels at which the distinction
between microscale and macroscale collapses.

\begin{proposition}
RG flow can be represented as a special case of MaxCal over kernels in which:
(i) the constraint is scale invariance of the partition function, and
(ii) the reference measure $\mathcal{Q}$ is uniform over the
renormalization group orbit.
\end{proposition}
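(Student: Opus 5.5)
The plan is to show that, once the two choices (i) and (ii) are made, the MaxCal kernel distribution~(\ref{eq:kernel_maxcal}) collapses onto a single deterministic trajectory, and that this trajectory is the RG flow. The argument runs in three stages.

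\emph{Stage 1: restrict the path space.} The RG maps $R_\Lambda$ form a semigroup acting on $\mathcal{K}$. I would parametrize the flow by $s=\ln\Lambda$, which plays the role of time in Section~\ref{sec:maxcal_kernels}. I would then restrict $\Pi$ to paths that stay on a single orbit through the initial kernel $k_0$:
\[
\mathcal{O}(k_0)=\{R_\Lambda k_0\}.
\]
Choice (ii) sets $\mathcal{Q}$ to the uniform measure on this orbit. Concretely, I would take the pushforward of Lebesgue measure in $s$ along the orbit. To keep this legitimate, I would truncate to a finite interval $s\in[s_0,s_1]$ so that the measure is normalizable.

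\emph{Stage 2: encode scale invariance as a path constraint and maximize.} I would write constraint (i) as
\[
f[\gamma]=\int \bigl\|\ln Z[\gamma(s)]-\ln Z[R_{\Lambda'/\Lambda}\gamma(s)]\bigr\|^2\,ds
\quad\text{with}\quad \langle f\rangle = 0.
\]
Equivalently, the $\dot{W}_k$ and $D_{\mathrm{KL}}$ terms are replaced by a single functional measuring the failure of $Z$ to be preserved under coarse-graining. The MaxCal solution then has the form
\[
P\propto \mathcal{Q}\,e^{-\lambda f}.
\]
The constraint value $0$ is the minimum of a nonnegative functional, so matching it forces $\lambda\to\infty$. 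In that limit $P$ concentrates on the zero set of $f$.

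The zero set consists of paths along which $Z$ is invariant under coarse-graining, which are exactly the paths with
\[
\gamma(s+\delta)=R_{e^\delta}\gamma(s).
\]
Differentiating in $\delta$ gives $d\gamma/ds=\beta(\gamma)$. Because $\mathcal{Q}$ is uniform on the orbit, it adds no bias, so the limiting measure is a Dirac mass on the RG trajectory through $k_0$.

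\emph{Stage 3: recover the fixed points.} A path that is constant at $k^*$ satisfies $f=0$ only if $R_\Lambda k^*=k^*$, which is the condition $\beta(k^*)=0$. Hence RG fixed points are self-consistent kernels in the sense of the Definition above.

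The main obstacle is Stage 2. The limit $\lambda\to\infty$ is a large-deviation or Laplace-type concentration, and for it to single out the trajectory I need the zero set of $f$ restricted to the orbit to be a single path. This requires uniqueness of solutions of $dk/ds=\beta(k)$, which I would secure by assuming $\beta$ is Lipschitz in the Hilbert--Schmidt metric.

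A second difficulty is that ``uniform over the orbit'' is only well defined once the orbit is a one-dimensional curve. I would therefore state the Proposition as a representation under these regularity hypotheses, and treat it as the structured correspondence the paper claims rather than a fully general theorem.
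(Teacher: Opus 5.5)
The paper gives no derivation here. It flags the Proposition as a structural correspondence and says a proof would need an explicit map between the RG semigroup and the MaxCal path measure on $\Pi$. Your sketch tries to build that map, but Stage 2 fails, and not for the reason you name.

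The integrand of $f[\gamma]$ compares $\gamma(s)$ only with its own coarse-graining $R_{\Lambda'/\Lambda}\gamma(s)$ at the same $s$. So $f[\gamma]=0$ just says that $\gamma(s)$ lies in $\{k:\ln Z[k]=\ln Z[Rk]\}$ for almost every $s$. That constrains which kernels the path visits, not the order or speed in which it visits them. There is no coupling between $\gamma(s)$ and $\gamma(s+\delta)$, so the zero set cannot be the set of paths with $\gamma(s+\delta)=R_{e^\delta}\gamma(s)$, and there is nothing to differentiate to get $d\gamma/ds=\beta(\gamma)$.

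Worse, a Wilsonian step is constructed precisely so that $Z$ is preserved: the integrated-out modes are absorbed into a field-independent term of the new action. Hence $\ln Z[k]=\ln Z[Rk]$ for every $k$, so $f\equiv 0$, the constraint is vacuous, and the MaxCal solution is simply $\mathcal{Q}$. Equality of the scalar $Z$ also does not imply equality of kernels, yet Stage 3 relies on that to conclude $R_\Lambda k^*=k^*$. The strong reading of scale invariance, $Rk=k$, would confine paths to the fixed-point set rather than produce the trajectory through a generic $k_0$.

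There are three further problems. First, the pushforward of Lebesgue measure in $s$ is a measure on the curve $\mathcal{O}(k_0)\subset\mathcal{K}$, not on the path space $\Pi$, so it is not yet a reference measure for MaxCal. Second, a Dirac mass on a single path has $\mathcal{S}=-\infty$ relative to any diffuse $\mathcal{Q}$. The maximizer therefore exists only as a Laplace-type limit, and in that limit the shape of $\mathcal{Q}$ drops out, so hypothesis (ii) does no work in your argument. Third, Lipschitz continuity of $\beta$ is not the missing ingredient. You need a constraint that couples scales, plus an initial-condition pin, since without the pin time-shifted segments of the orbit also satisfy the flow.

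Two repairs would work:
\begin{itemize}
\item the constraint $\int_{s_0}^{s_1}\|\dot\gamma(s)-\beta(\gamma(s))\|_{\mathrm{HS}}^2\,ds$ with $\gamma(s_0)=k_0$ fixed;
\item a small-noise Onsager--Machlup/Freidlin--Wentzell limit of a reference diffusion with drift $\beta$.
\end{itemize}
Either version concentrates on the RG trajectory, but only because $\beta$ is inserted by hand through the constraint or through $\mathcal{Q}$. It is not derived from scale invariance of $Z$ and a uniform orbit measure. That is exactly the explicit map the paper says is still missing, so the proposal does not establish the Proposition under hypotheses (i) and (ii).
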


\noindent\textit{Proof status.}
This statement is currently a structural correspondence rather than a full
derivation. A complete proof requires an explicit map between the RG
coarse-graining semigroup and the MaxCal path measure on $\Pi$.

\begin{conjecture}
The critical exponents at RG fixed points equal the eigenvalues of
$D^2_k \mathcal{S}^*|_{k^*}$ in the Hilbert-Schmidt metric, so that
universality classes correspond to stability basins of self-consistent kernels.
\end{conjecture}

\subsection{Neural Tangent Kernel Evolution}

For an infinitely wide neural network, the neural tangent kernel (NTK)
$\Theta(x, x')$ governs the training dynamics~\cite{jacot2018}:
\begin{equation}
    \frac{d\hat{y}}{dt} = -\Theta \cdot \nabla_{\hat{y}} \mathcal{L}
\end{equation}
During training of finite networks, $\Theta$ evolves. The MaxCal
kernel framework predicts:

\begin{enumerate}
    \item The trajectory of $\Theta(t)$ through $\mathcal{K}$
    maximizes path entropy subject to cross-entropy loss reduction at rate
    $\geq \dot{\mathcal{L}}_{\min}$.

    \item Fixed points of NTK evolution correspond to networks that have
    learned a self-consistent representation of their training
    distribution---feature hierarchies that are self-reinforcing.

    \item The thermodynamic cost bound~(\ref{eq:kernel_landauer}) predicts
    a minimum energy dissipation during NTK evolution:
    $\dot{W} \geq k_B T \cdot \dot{I}_\Theta$, which may be
    empirically testable via GPU power consumption during training.
\end{enumerate}

\begin{conjecture}
The NTK of a trained diffusion model converges to the Hellinger kernel
$k_H$ of the data distribution---the unique kernel whose geometry
respects sufficient statistics~\cite{chentsov1982}.
\end{conjecture}

In the NTK case, the appeal of the framework is not merely analogical:
$\Theta_t$ is a measurable evolving kernel, and the framework predicts
constraints on its trajectory relative to information gain and energetic
expenditure during training.

\paragraph*{Empirical protocol (falsifiable).}
To test this section's claims, one can: (i) estimate $\Theta_t$ at fixed
training intervals for width-scaled model families, (ii) compute a proxy
for $\dot{I}_{\Theta_t}$ from held-out representation statistics,
and (iii) record wall-power draw to estimate $\dot{W}(t)$.
The prediction is an inequality trend
$\dot{W}(t) \geq c\,\dot{I}_{\Theta_t}$ up to calibration constant $c$,
with tighter agreement at larger width and slower learning rate.

\subsection{Adaptive Sampling for Dynamic Algal Blooms in a Lake}

Consider a lake with a time-varying bloom field
$b_t(x)$ (e.g., chlorophyll concentration) over spatial location
$x \in \Omega$. A natural operational realization is a heterogeneous team:
an autonomous surface vehicle (ASV) carrying an autonomous underwater
vehicle (AUV) with coordinated dock and undock. The ASV supplies
long-endurance transit, surface-visible fields (e.g., temperature and
surface color proxies), and a stable platform for docking, recharge, and
data offload; the AUV, when undocked, samples vertical structure (e.g.,
chlorophyll maximum depth, oxygen, turbidity) that the surface cannot see.
The key operational difficulty is that bloom fronts advect and deform on
timescales comparable to the mission duration, while subsurface structure
can misalign with surface patches---so the team's belief couples two
partially overlapping observation operators fused at rendezvous.

Let $\sigma_t \in \{\mathrm{docked},\mathrm{undocked}\}$ denote the
coordination state. While $\sigma_t=\mathrm{docked}$, the AUV is carried
and acts as payload; undocking initiates a dive phase with distinct motion
costs and information yield. Let $k_t$ be the kernel used by the onboard
model for spatiotemporal covariance in bloom dynamics (possibly including
depth or modality-specific components after vertical profiles are merged
at dock), and let $u_t(x)$ denote posterior uncertainty under $k_t$.
Actions decompose into ASV waypoints $a_t^{\mathrm{ASV}}$, AUV profile
commands when $\sigma_t=\mathrm{undocked}$, and feasible dock/undock
transitions; subsurface samples accrue only in the undocked phase.
Write $\mathcal{A}_t$ for the joint tuple
$(a_t^{\mathrm{ASV}},a_t^{\mathrm{AUV}},\sigma_t)$.
Mission resources impose constraints:
\begin{equation}
\begin{split}
    \sum_t \bigl( c_{\mathrm{move}}^{\mathrm{ASV}}(a_t^{\mathrm{ASV}},a_{t-1}^{\mathrm{ASV}})
    + c_{\mathrm{dive}}^{\mathrm{AUV}}(a_t^{\mathrm{AUV}},\sigma_t)
    \bigr) &\leq E_{\max}, \\
    \sum_t \mathbbm{1}[\text{collect at } t] &\leq N_{\max}.
\end{split}
\end{equation}
For sample return to base (or shore offload), a feasibility reserve must
be maintained; for heterogeneous operation, rendezvous between ASV and
surfacing AUV imposes an analogous coupled constraint on relative position
and time:
\begin{equation}
\begin{split}
    c_{\mathrm{return}}(a_t^{\mathrm{ASV}},\text{base})
    &\leq E_{\mathrm{reserve}}(t), \\
    c_{\mathrm{meet}}(a_t^{\mathrm{ASV}},a_t^{\mathrm{AUV}},\sigma_t)
    &\leq \Delta_{\mathrm{meet}}(t).
\end{split}
\end{equation}

The adaptive objective is to maximize expected information gain about
future bloom structure per mission cost:
\begin{equation}
\begin{split}
    \max_{\{\mathcal{A}_t\},\{k_t\}}
    \; \mathbb{E}\!\left[\sum_t \Delta I_t(k_t,\mathcal{A}_t)\right]
    &- \lambda_E \sum_t c_{\mathrm{move}}^{\mathrm{ASV}}(a_t^{\mathrm{ASV}},a_{t-1}^{\mathrm{ASV}}) \\
    &- \lambda_N \sum_t \mathbbm{1}[\text{collect at } t],
\end{split}
\end{equation}
where dive costs are
subsumed in the energy budget above, with dynamics over kernels governed
by the MaxCal tradeoff in Section~\ref{sec:maxcal_kernels}. Intuitively,
$k_t$ should reweight distinctions toward moving bloom boundaries and
shear-aligned filaments, and---after each dock---toward consistent
cross-depth covariance, because those structures yield maximal uncertainty
reduction per cycle subject to rendezvous feasibility.

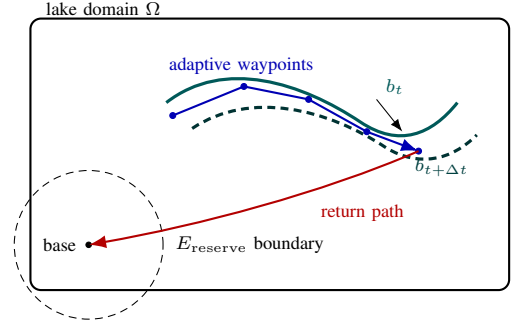
\begin{figure}[t]
\centering
\begin{tikzpicture}[>=Latex,scale=0.86,every node/.style={font=\scriptsize}]
    \draw[thick,rounded corners] (0,0) rectangle (7.4,4.2);
    \node[anchor=west] at (0.1,4.35) {lake domain $\Omega$};

    \filldraw[black] (0.9,0.7) circle (1.2pt);
    \node[anchor=east] at (0.85,0.7) {base};
    \draw[densely dashed] (0.9,0.7) circle (1.15);
    \node[anchor=west] at (2.1,0.7) {$E_{\mathrm{reserve}}$ boundary};

    \draw[very thick,teal!70!black]
        (2.1,2.9) .. controls (3.0,3.6) and (4.2,3.2) .. (5.1,2.6)
        .. controls (5.7,2.2) and (6.2,2.4) .. (6.6,2.9);
    \draw[very thick,teal!40!black,densely dashed]
        (2.5,2.5) .. controls (3.4,3.1) and (4.6,2.8) .. (5.5,2.2)
        .. controls (6.0,1.9) and (6.5,2.0) .. (6.9,2.4);
    \node[teal!70!black,anchor=west] at (5.35,3.15) {$b_t$};
    \node[teal!50!black,anchor=west] at (5.8,1.9) {$b_{t+\Delta t}$};
    \draw[->,thin] (5.35,2.95) -- (5.75,2.45);

    \filldraw[blue!70!black] (2.2,2.7) circle (1.3pt);
    \filldraw[blue!70!black] (3.3,3.15) circle (1.3pt);
    \filldraw[blue!70!black] (4.3,2.95) circle (1.3pt);
    \filldraw[blue!70!black] (5.2,2.45) circle (1.3pt);
    \filldraw[blue!70!black] (6.0,2.15) circle (1.3pt);
    \draw[->,blue!70!black,thick]
        (2.2,2.7) -- (3.3,3.15) -- (4.3,2.95) -- (5.2,2.45) -- (6.0,2.15);
    \node[blue!70!black,anchor=south west] at (2.0,3.2) {adaptive waypoints};

    \draw[->,red!70!black,thick] (6.0,2.15) .. controls (4.5,1.5) and (2.6,1.0) .. (0.9,0.7);
    \node[red!70!black,anchor=west] at (4.35,1.2) {return path};
\end{tikzpicture}
\caption{Lake algal-bloom scenario for adaptive sample return.
The bloom front advects from $b_t$ to $b_{t+\Delta t}$; adaptive-kernel
planning shifts waypoints toward moving high-information boundaries while
maintaining return-to-base feasibility. The same structure extends to an
ASV--AUV team with coordinated dock/undock: surface waypoints place the
stack for subsurface profiles, and rendezvous windows replace
return-to-base alone as a binding feasibility constraint.}
\label{fig:lake_bloom_adaptive_sampling}
\end{figure}

This case is operational rather than metaphorical because $k_t$ can
be interpreted as a task-specific similarity or acquisition function
governing what distinctions are worth sensing under budgeted action.
This yields a concrete adaptation argument: fixed kernels induce static
sampling lattices that under-sample moving bloom fronts, while kernel
adaptation concentrates trajectories along information-rich transients
subject to return-feasibility constraints. For ASV--AUV teams, undock
timing is an additional discrete decision: dives should occur when
subsurface uncertainty is high relative to surface-observable structure,
and heavy kernel or map updates can run on the ASV while docked.
\paragraph*{Testable prediction.}
Against a fixed-kernel baseline, adaptive-kernel planning should produce
higher forecast skill at equal energy budget and equal returned sample
count, with the largest gains during high-advection intervals; for
heterogeneous teams, gains should be most visible on depth-resolved
forecast metrics when surface and subsurface fields decorrelate.

\subsection{Biological Evolution}

Biological evolution instantiates kernel dynamics at the level of
perceptual and cognitive systems. The fitness kernel
$k_f(\text{genotype}, \text{genotype}')$ encodes which genetic
differences translate into phenotypic differences that selection can act
on. This kernel co-evolves with the genome under:
\begin{equation}
    \frac{dk_f}{dt} = -\lambda \frac{\delta \mathcal{F}_{\mathrm{fitness}}}
    {\delta k} + \eta(t)
\end{equation}
where $\mathcal{F}_{\mathrm{fitness}}$ is a free-energy-like fitness
functional and $\eta$ is mutational noise. Speciation events
correspond to bifurcations in kernel space---transitions between basins
of attraction of distinct self-consistent fitness kernels.
Operationally, $k_f$ can be estimated from genotype--phenotype--fitness
datasets by fitting local similarity operators that predict fitness
response to perturbations.

\subsection{Craft Mastery and Embodied Knowledge}

In~\cite{das2026navagunjara}, the design of a large-scale public artwork
was modeled as a product kernel problem: the final artifact is the
configuration that simultaneously satisfies the inductive kernels of all
contributors---mythological, craft, structural, and environmental.

The MaxCal kernel framework provides the dynamical substrate for this
observation.

A concrete illustration is provided by \emph{Navagunjara Reborn}, a large-scale Burning Man sculpture whose design and realization can be read as a trajectory through interacting kernels rather than execution of a fixed blueprint. In that project, the final artifact emerged from the simultaneous action of mythological, craft, structural, logistical, and environmental kernels: the lead artist's symbolic and site-specific priors, Rajesh Moharana's dhokra metalwork intuitions, Ekadashi Barik's cane-forming expertise, the engineering team's load-bearing and fabrication constraints, and the Black Rock Desert's wind, transport, and fire-safety requirements. The sculpture's realized form was not the maximizer of a single objective, but the configuration that remained viable under all of these distinction-making systems at once. In this sense, the project instantiates the product-kernel view dynamically: iterative digital modeling, photogrammetric feedback, structural redesign, material substitution, and on-playa improvisation can be interpreted as successive updates to a collective kernel trajectory, progressively narrowing the reachable design space until a coherent artifact became the least-assuming surviving path. At the level of individual practice, the master artisan's embodied kernel appears as a locally stable fixed point in morphological decision space; at the level of the collaboration, the completed sculpture is the transient intersection of several such fixed points under hard environmental and logistical constraints.

A master craftsman's kernel (e.g., Rajesh Moharana's dhokra
metalwork kernel encoding lost-wax bronze topology) is a stable
self-consistent kernel: the distinctions encoded by decades of practice
are self-reinforcing under the thermodynamic dynamics of craft
execution~\cite{das2026navagunjara}. Such a kernel is a fixed point because
the geometry it induces on morphological space makes it the
maximum-caliber choice given the constraints of the tradition.
In this domain, a measurable proxy for kernel evolution is a similarity
operator inferred from artifact morphology and process traces across an
apprenticeship time series.

Apprenticeship is a trajectory through kernel space from an unstable
initial kernel toward the master's fixed point. The thermodynamic cost
bound~(\ref{eq:kernel_landauer}) predicts that this trajectory requires
sustained metabolic investment proportional to the mutual-information
gap between novice and master kernels.

% -----------------------------------------------------------------------
\section{Relation to Assembly Theory}
\label{sec:assembly}
% -----------------------------------------------------------------------

Assembly theory~\cite{sharma2023} measures the assembly index $a(x)$
of an object $x$ as the minimum number of recursive joining steps
required to construct it from basic building blocks. The assembly index
is observer-independent and empirically measurable by mass spectrometry.

The relationship to MaxCal over kernels is:

\begin{enumerate}
    \item \textbf{Ontogenesis vs.\ mechanics.} Assembly theory describes
    how construction complexity accumulates until a system first acquires
    the capacity to distinguish microstates---to bear a kernel. MaxCal
    over kernels describes how that kernel evolves once born.

    \item \textbf{Assembly index as RKHS complexity.} We conjecture that
    the assembly index $a(x)$ is lower-bounded by the RKHS complexity
    of the kernel required to represent $x$:
    \[
        a(x) \geq c \cdot \| k_x \|_{\mathcal{H}} + O(1)
    \]
    where $k_x$ is the minimal kernel distinguishing $x$ from
    its chemical precursors.

    \item \textbf{The threshold $a(x) \geq 15$.} The empirical
    signature of selection in assembly theory---objects with
    $a(x) \geq 15$ are almost certainly products of
    selection~\cite{sharma2023}---corresponds, in our framework, to
    the minimum RKHS complexity required for a self-consistent kernel
    to exist. Below this threshold, the kernel space is too simple to
    support self-reinforcing fixed points.
\end{enumerate}

\subsection{Conceptual Synthesis}

The three frameworks compared here share information as a central
concept and differ primarily in their primitive variables:

\begin{table}[h]
\centering
\caption{Framework primitives and key observables.}
\label{tab:triangle}
\begin{tabular}{lll}
\toprule
\textbf{Framework} & \textbf{Primitive} & \textbf{Key observable} \\
\midrule
This work        & Path over $\mathcal{K}$     & Kernel trajectory \\
Static kernel limit & Fixed $k(x,x')$          & Mutual information $I$ \\
Assembly theory  & Construction path           & Assembly index \\
Maximum caliber  & Path entropy                & Trajectory distribution \\
\bottomrule
\end{tabular}
\end{table}

The present paper proposes a synthesis in which kernel dynamics describes
how kernels traverse the landscape between chemical origin constraints
(assembly theory) and stable operating points (static-kernel limit), along
least-assuming trajectories selected by MaxCal.

% -----------------------------------------------------------------------
\section{Claim Levels and Scope}
\label{sec:claim_levels}
% -----------------------------------------------------------------------

For clarity, we separate three claim types used in this paper.

\begin{enumerate}
    \item \textbf{Formal definitions/results in this manuscript.}
    Kernel space $\mathcal{K}$, MaxCal lifting to path measures on
    $\Pi$, and the fixed-point condition for self-consistent kernels.

    \item \textbf{Structured correspondences.}
    Mappings from the core framework to RG and finite-width NTK evolution.
    These are mathematically motivated identifications, but not complete
    equivalence proofs in the present draft.

    \item \textbf{Conjectural bridges.}
    Biological niches, craft mastery, adaptive field sampling, and
    assembly-theoretic thresholds are proposed as testable interpretations
    that require dedicated empirical and model-specific validation.
\end{enumerate}

The intended contribution is therefore a \emph{unifying variational
framework with explicit testable conjectures}, not a completed final theory
of order across all domains.

% -----------------------------------------------------------------------
\section{Open Questions}
\label{sec:open}
% -----------------------------------------------------------------------

\begin{enumerate}

    \item \textbf{Kernel geodesics.} What is the geodesic in
    $(\mathcal{K}, d_{\mathrm{HS}})$ between two self-consistent
    kernels, and does it pass through a saddle point corresponding to a
    Kuhnian crisis?

    \item \textbf{Quantum kernel dynamics.} Does the Fubini-Study metric
    extension to $\mathbb{CP}^{n-1}$ support an analogous MaxCal
    formulation for quantum kernels, and do quantum phase transitions
    correspond to fixed-point bifurcations in quantum kernel space?

    \item \textbf{Assembly index and RKHS complexity.} Can the conjectured
    bound $a(x) \geq c \| k_x \|_{\mathcal{H}}$ be proved or
    disproved for a specific model system (e.g., small organic molecules)?

    \item \textbf{NTK convergence.} Does the NTK of a trained diffusion
    model converge to the Hellinger kernel $k_H$ of the data
    distribution in the infinite-width limit?

    \item \textbf{Lake bloom adaptive sampling.} Can an online
    kernel-adaptation policy for dynamic algal blooms improve
    chlorophyll-forecast skill at fixed energy and fixed returned-sample
    count relative to fixed-kernel sampling---including for heterogeneous
    ASV--AUV missions with coordinated dock/undock and depth-resolved
    validation?

    \item \textbf{Paradigm shift thermodynamics.} Can the thermodynamic
    cost bound~(\ref{eq:kernel_landauer}) be used to predict the
    timescale of scientific paradigm shifts from citation network
    data---testing the Kuhn corollary as a quantitative prediction?
    A measurable proxy is a time-indexed similarity kernel over papers
    (e.g., embedding or co-citation based) and its trajectory in
    Hilbert-Schmidt distance.

\end{enumerate}

% -----------------------------------------------------------------------
\section{Conclusion}
% -----------------------------------------------------------------------

We have proposed Maximum Caliber over kernel space as a variational
principle for the dynamics of distinction-making systems. The framework
organizes renormalization group flow, neural tangent kernel evolution,
biological speciation, and craft mastery as candidate instantiations and
analogues of a single kernel-dynamics picture. Fixed points are
self-consistent kernels whose stability is set by the curvature of the
frozen-kernel path-entropy objective. Kernel change has a thermodynamic
cost under explicit information-thermodynamic assumptions, motivating
quantitative predictions for paradigm-transition timescales and adaptive
sampling policies.

Together with assembly theory~\cite{sharma2023} and
Maximum Caliber~\cite{press2013}, kernel dynamics supports a
three-part research program: assembly theory explains how the first
kernel-bearing systems arise; kernel dynamics explains how those kernels
evolve; and the fixed-kernel limit analyzed here explains what they do
while approximately stable, with MaxCal providing the variational
principle connecting scales.

The present paper is intended not as a completed theory of kernel change
across all domains, but as a variational framework that makes such change
mathematically discussable and empirically targetable. Its value lies in
isolating a common question across learning, physics, biology, and craft:
what governs trajectories through the space of distinction-making structures
when representational change itself becomes the dynamical variable?

\section*{Acknowledgments}
The framework builds on intellectual proximity to Sara Imari Walker's
program on assembly theory and Steve Press\'e's work on maximum caliber,
both at Arizona State University. The craft-kernel instantiation
developed from collaboration with Rajesh Moharana, Ekadashi Barik, and
the artisan teams of Odisha documented in~\cite{das2026navagunjara}.

\end{document}